\def\eqref#1{equation~(\ref{#1})}
\def\1{\bm{1}}
\DeclareMathAlphabet{\mathsfit}{\encodingdefault}{\sfdefault}{m}{sl}
\SetMathAlphabet{\mathsfit}{bold}{\encodingdefault}{\sfdefault}{bx}{n}
\newcommand{\E}{\mathbb{E}}
\DeclareMathOperator*{\argmax}{arg\,max}
\newtheorem{theorem}{Theorem}
\newtheorem{definition}[theorem]{Definition}
\def\nn{\nonumber}
\def\E{\mathbb{E}}
\def\Pr{\mathbb{P}}
\def\1{\mathbf{1}}
\def\hat{\widehat}
\definecolor{stanfordred}{rgb}{0.54901961, 0.08235294, 0.08235294}
\newcommand{\kibitz}[2]{\ifnum\Comments=1{\textcolor{#1}{\textsf{\footnotesize #2}}}\fi}
\definecolor{darkred}{rgb}{0.7,0,0}
\definecolor{darkgreen}{rgb}{0.0,0.5,0.0}
\definecolor{darkblue}{rgb}{0.0,0.0,0.5}
\definecolor{teal}{rgb}{0.0,0.5,0.5}
\newcommand{\theirs}{agglomerative}
\title{Inclusive Artificial Intelligence}
\author[]{Dilip Arumugam\thanks{\texttt{dilip@cs.stanford.edu}} }
\author[]{Shi Dong\thanks{\texttt{sdong15@stanford.edu}} }
\author[]{Benjamin Van Roy\thanks{\texttt{bvr@stanford.edu}}}
\affil[]{Stanford University}
\date{\today}
\begin{document}

\maketitle

\begin{abstract}
Prevailing methods for assessing and comparing {\it generative AI}s incentivize responses that serve a hypothetical representative individual.  Evaluating models in these terms presumes homogeneous preferences across the population and engenders selection of {\it agglomerative AIs}, which fail to represent the diverse range of interests across individuals.  We propose an alternative evaluation method that instead prioritizes {\it inclusive AI}s, which provably retain the requisite knowledge not only for subsequent response customization to particular segments of the population but also for utility-maximizing decisions.
\end{abstract}

\section{Introduction}

{\it Generative AIs} have the potential to deliver tremendous value to society at large. The current leading approach is characterized by enormous models, ranging from billions to trillions of parameters, that first undergo a pretraining phase facilitated by huge swaths of data from the World Wide Web before honing their competency through interaction with humans.  Given a natural language prompt, for example, such an AI can generate a useful response, whether it be text, artwork, or computer code.  While much discourse has focused on the extent to which generative AIs truly understand language~\citep{bender-koller-2020-climbing,manning2022human}, they are transcending the standard language model to serve a wide variety of needs~\citep{ziegler2019fine,bommasani2021opportunities,bai2022training,huggingface2022rlhf} including text summarization~\citep{stiennon2020learning,wu2021recursively}, web navigation~\citep{nakano2021webgpt}, open-ended text generation~\citep{ouyang2022training}, question answering~\citep{menick2022teaching}, and dialogue~\citep{glaese2022improving,openai2022chatgpt}. 

Curiously, prevailing methods for assessing and comparing generative AIs incentivize the eventual selection of {\it agglomerative AIs}, which produce responses that serve a single, prototypical individual meant to represent the collective predilections of the entire population; clearly, such a societal archetype is not only hypothetical but also entirely fictitious given the diverse range of interests and perspectives across the population. The ramifications of such monomaniac selection procedures can prove to be severe, rendering AIs incapable of subsequent alignment to the preferences of individuals or sub-populations.

By {\it inclusive AIs}, we refer to those that aim to represent heterogeneity across the population through their responses.  Where an agglomerative AI strives for an optimal response to each prompt, an inclusive AI produces a distribution of responses that reflect diverse preferences.  Incidentally, current pretraining practices do produce a sort of inclusive AI, as models come to mimic variations across responses observed in Web data.  On the other hand, current fine-tuning\footnote{This work, in particular, exclusively focuses on fine-tuning with reinforcement learning and human feedback~\citep{stiennon2020learning,ouyang2022training}, rather than fine-tuning as it appears in the supervised-learning literature.} practices, used to subsequently learn from human interaction, tend to agglomerate and are rewarded for this by methods that are used to assess and compare competing models.  Not only does this fail to refine knowledge about diversity of preferences but it corrupts such knowledge accumulated during pretraining.

We will propose an alternative assessment method that prioritizes selection of {\it inclusive AI}s.  The aim is to avoid compromising holistic societal preferences in favor of any dominating majority.  Instead, AIs that fare well according to our assessment should sharpen the latent corpus of knowledge acquired during pre-training to best reflect the overall population. With rapid progress in this area underway and community norms around best practices already beginning to converge, careful consideration and restructuring of evaluation methods will be paramount to ensuring that generative AIs remain beneficent and afford value to society at large.

\section{Comparing Generative AIs}
\label{sec:comparing-generative-ais}

We take a generative AI to be a function mapping each prompt to a distribution over possible responses; naturally, a single response can then be sampled from this distribution.  As is common in AI benchmarking, we consider a head-to-head contest in which two AIs are each presented with a sequence $(X_t:t=1\ldots,T)$ of prompts.  We will refer to the competing AIs as $A$ and $B$ -- the comparison can be thought of as an $A/B$ test~\citep{johari2017peeking}.  For each $t \in \{1,2,\ldots,T\}$, the first model produces a response distribution $P_A(\cdot \mid X_t)$ and sampled response $Y_{t,A} \sim P_A(\cdot \mid X_t)$; analogously, the second produces distribution $P_B(\cdot \mid X_t)$ and response $Y_{t,B} \sim P_B(\cdot \mid X_t)$.  An annotator is asked to indicate their preference $L_t(A,B) \in \{0,1\}$, which is $1$ if $A$ is preferred and $0$ if $B$.  For simplicity, we will assume that each annotator is sampled randomly from the human population.

\subsection{Agglomerative Objective}

A first impulse may lead us to score each AI according to the number of times it is preferred:
\begin{align}
\label{eq:agglomerative-objective}
S^{\mathrm{agg}}(A,B) = \sum_{t=1}^T L_t(A,B).
\end{align}
Then, we could choose an AI by comparing $S^{\mathrm{agg}}(A,B)$ against $S^{\mathrm{agg}}(B,A)$.
We will refer to this scoring function as the {\it agglomerative objective}.
Indeed, this is the essence of approaches used to compare state-of-the-art generative AIs~\citep{ziegler2019fine,ouyang2022training,bai2022training,glaese2022improving,bakker2022fine}.  When the competitor's identity is clear from context, we will suppress the arguments and simply write $S^{\mathrm{agg}}_A$ instead of $S^{\mathrm{agg}}(A,B)$.

A limitation of the agglomerative objective is that it encourages agglomerative AIs.  In particular, as we will establish under mild conditions in Theorem \ref{th:deterministic-agglomerative-ai}, there exists an AI that only ever outputs a Dirac delta distribution (that is, a one-hot probability mass function) and fares at least as well as any other with respect to the agglomerative objective. In other words, the agglomerative objective is optimized by an agglomerative AI, which outputs a unique response to each prompt.

\subsection{Inclusive Objective}

We propose as an alternative scoring function:
\begin{align}
\label{eq:inclusive-objective}
S^{\mathrm{inc}}(A,B) =  \sum_{t=1}^T &\Big\{L_t(A,B) \log P_A(Y_{t,A} \mid X_t) + (1-L_t(A,B)) \log P_A(Y_{t,B} \mid X_t) \nn\\
&- \log \Big(P_A(Y_{t,A} \mid X_t) + P_A(Y_{t,B}  \mid  X_t)\Big)\Big\}.
\end{align}
We will refer to this as the {\it inclusive objective}.  Again, when the competitor's identity is clear from context, we will suppress the arguments and simply write $S^{\mathrm{inc}}_A$ instead of $S^{\mathrm{inc}}(A,B)$.

A didactic example serves to illustrate how these two objectives select different AIs.  Consider competing AIs $A$ and $B$ compared across $T=3$ trials, each with the same prompt: $X_1 = X_2 = X_3$.  Suppose there are two possible responses $\{1,2\}$, the first preferred by two-thirds of the population and the second by one-third.  AI $A$ is inclusive, assigning probabilities $P_A(1 \mid X_t) = 2/3$ and $P_A(2 \mid X_t)=1/3$ to the two responses.  AI $B$ is agglomerative: assigning all probability to the first response: $P_B(1 \mid X_t) = 1$ and $P_B(2 \mid X_t) = 0$.  Consider responses generated by the two AIs as illustrated in Figure \ref{fig:objectives_example}.  Choices made by annotators are shaded in green.  The agglomerative objective selects AI $B$ because it is preferred in two of the three trials.  The inclusive objective, on the other hand, evaluates to $2 \log (2/3) + \log(1/3)$ for AI $A$ and $-\infty$ for AI $B$, and thus selects AI $A$.  The reason AI $B$ fares so poorly is that, on the third trial, it assigns probability $P_{B}(1 \mid X_3) = 1$ to the response $1$, which conveys extreme confidence, though it turns out that the annotator chooses response $0$.  It is also worth noting that, in the first two trials, the inclusive objective does not penalize AI $A$ though its response is not chosen.  This is because, instead of the choice, the objective judges the probability $P_A(1 \mid X_3) = P_A(1 \mid X_3) = 2/3$ that the AI assigned to what was chosen.

\begin{figure}[htb]
\begin{center}
\includegraphics[scale=0.6]{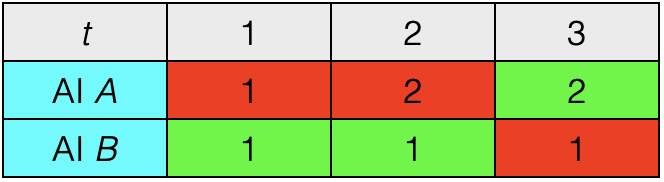}
\end{center}
\caption{A didactic example of responses generated by two AIs over three trials.  Annotator choices for the preferred response are shaded in green.}
\label{fig:objectives_example}
\end{figure}

\subsection{Motivating Examples}

The key merit of our inclusive objective is precisely in its ability to avoid the selection of agglomerative AIs.
An inclusive AI retains information about diverse preferences across the population, ultimately resolving to a distribution that, in many applications, can better serve downstream users than a single, maximally-preferable response. To help clarify the potential benefits, consider a representative use case: an amateur writer who is not only looking to produce topically-relevant prose but also has certain stylistic preferences around diction. An idealized workflow for this individual would likely consist of prompting the system with some text, sampling a few dozen articles written in response, and choosing their favorite. Note, however, that this selection step becomes far less useful to the writer when the response distribution places all probability mass on a single response.

% Our inclusive objective merits consideration because it incentivizes inclusive rather than agglomerative AIs.
% An inclusive AI retains information about diverse preferences across the population, and there are benefits to this.
% To understand potential benefits, let us begin with a representative use case: a writer prompts the system with some text, observes a few dozen articles written in response, and chooses their favorite.  This system would be less useful with an agglomerative AI that produces only a single response.

A devil's advocate might suggest that there actually {\it ought} to be a single optimal response and that what this example overlooks is the possibility of a more elaborate prompt that either fully reflects the writer's preferences or encourages the generation of multiple candidate options.  In the former case, the prompt could present everything relevant to selecting among articles, potentially obviating the need for a distribution; however, the cognitive load imposed by such detailed specification makes this impractical.  Indeed, much of the value offered by generative AIs stems from the {\it reduction} of cognitive effort required in creative processes. To that same effect, one might entertain the idea of a single, best-response model that itself produces an entire response distribution as its singular response. Yet, clearly, such a distribution with an overwhelmingly large (albeit, finite) support is far too cumbersome for a recipient to make sensible determinations about how to best utilize it. Thus, once again, cognitive load for the end user stands as an impediment to practical use of such an agglomerative AI. 

Alternatively, one might maintain that a response distribution is merely an intermediate output and, ultimately, what users care about are representative i.i.d. samples from this distribution, which an agglomerative AI could easily be prompted to generate. While this style of prompting is perfectly plausible, recall that the agglomerative objective for model selection entails a head-to-head contest where, in this context, each prompt essentially specifies some underlying response distribution and human evaluators must determine which AI produces a more plausible sequence of i.i.d. samples from this distribution; results from the cognitive-science literature already inform us that humans are demonstrably bad at this exact task of perceiving and judging randomness~\citep{griffiths2001reconciling,williams2013people}, even for observations generated from the simplest of distributions (sequences of observed coin flips, as an example). Consequently, model selection based on such human evaluations is likely to be fraught with errors and inaccuracies; in contrast, our inclusive objective preferentially gives rise to AIs which yield such response distributions without requiring human judgement to assess the accuracy of the underlying probabilities.

While this work largely focuses on the role and benefits of response randomization when selecting between generative AIs, there is an important nuance between the kinds of uncertainty under consideration. In particular, the preceding example has focused on the benefits of representing {\it aleatoric} uncertainty~\citep{der2009aleatory} such that, even if the AI has full knowledge of preferences across the entire population, a prompt which does not fully convey the individual's preferences entails a desirable response that is itself a random variable from the AIs perspective.  An additional benefit supported by inclusive AIs arises with representation of {\it epistemic} uncertainty that manifests due to the insufficiency of training data. Consider a generative AI designed to aid in the diagnosis of medical ailments where each prompt describes a patient's symptoms and each response offers a diagnosis. Suppose a particular prompt presents highly unusual symptoms, unlike any observed during pretraining.  Rather than an agglomerative AI that only produces a single diagnosis, there can be great value to an inclusive AI that expresses its lack of confidence and knowledge (that is, its high degree of epistemic uncertainty) through a response distribution.  This uncertainty can play
a critical role in a physician's treatment while also highlighting a potential, productive path where such generative AIs can augment, rather than replace, physicians and their skill sets in order to accelerate the diagnostic process~\citep{gottesman2019guidelines}.

\section{Theoretical Analysis}
\label{sec:theory}

We next establish theoretical results that formalize the differences between objectives and the benefits of inclusive AIs. In the interest of space, all proofs of results stated in this section are relegated to the appendix.

\subsection{Formalism}

% We begin by studying the case where human choices are generated by a random utility model and the AIs are implemented as random utility models as well.

% \begin{example}
% {\bf (multinomial logistic models)}
% We consider multinomial logit models, where the prompt $x$ is an $N$-dimensional real-valued vector and the response is an element of a finite set indexed by $\{1,\ldots,M\}$.  Given parameters $\theta$ of a logistic model, we denote the logit vector by $u(X, \theta)$.  This vector gives rise to a distributional prediction
% $$P(y \mid  x, \theta) = \frac{e^{u_y(X, \theta)}}{\sum_{m=1}^M e^{u_m(X, \theta)}},$$
% where each $P(y \mid  x, \theta)$ can be thought of as an estimate of the fraction of the population that favors $y$.
% \end{example}

We model uncertain quantities as random variables, each defined with respect to a common probability space $(\Omega, \mathcal{F}, \mathbb{P})$.
Let $\mathcal{X}$ be the set of prompts and $\mathcal{Y}$ the set of responses.  
To simplify analysis, we assume that both $\mathcal{X}$ and $\mathcal{Y}$ are finite.
We assume that the sequence $(X_t: t=1,\dots, T)$ of prompts is i.i.d.  Recall that a generative AI $A$, maps each prompt $x \in \mathcal{X}$ to a probability distribution $P_A(\cdot \mid x)$ over responses.

To model how preferences vary across the population, we consider {\it random utilities}.
After generative AIs $A$ and $B$ produce responses $Y_{t,A}$ and $Y_{t,B}$ respectively to a prompt $X_t$, the prompt-response pairs are presented to an annotator drawn uniformly at random from the population.  The random annotator assigns a real-valued utilities $U_t(X_t, Y_{t,A})$ and $U_t(X_t, Y_{t,B})$ to the responses and provides binary feedback, with $L_t = 1$ if $U_t(X_t, Y_{t,A}) > U_t(X_t, Y_{t,B})$ and $L_t = 0$ if $U_t(X_t, Y_{t,A}) < U_t(X_t, Y_{t,B})$.  If $U_t(X_t, Y_{t,A}) = U_t(X_t, Y_{t,B})$, $L_t$ is based on a fair coin toss.

For a fixed prompt, different individuals can prefer different responses.  For each prompt $X \in \mathcal{X}$, we denote by $\overline{P}(\cdot \mid X)$ the distribution of the response that would be selected by a uniformly sampled member of the population who can choose any response in $\mathcal{Y}$.  We will refer to $\overline{P}$ the {\it population choice distribution}.
To understand the relation between this object and the sampled individual's utility function $\overline{U}$, note that if the individual selects a response $Y \in \mathcal{Y}$ then $\Pr(Y \mid X) = \overline{P}(Y \mid X)$ and $\overline{U}(X, Y) = \max_{y \in \mathcal{Y}} \overline{U}(X, y)$.

\subsection{Objective Function Outcomes}

% \begin{definition}
% A {\bf utility function} is a function $u: \mc{X} \times \mc{Y} \rightarrow \mathbb{R}$ that, for each prompt $x \in \mathcal{X}$, assigns a real value $u(y \mid x)$ to each response $y \in \mathcal{Y}$.
% \end{definition}

% \begin{definition}
%     Let $\mc{U} = \{\mc{X} \times \mc{Y} \rightarrow \mathbb{R}\}$ denote A \textbf{population} is a finite collection of utility functions, each representing an individual's preferences. 
% \end{definition}

% \begin{definition}
% A {\bf random utility model} $\Pi$ specifies a probability distribution $\Pi(\cdot \mid x)$ over $\Re^{\mathcal{Y}}$ of responses for each prompt $x \in \mathcal{X}$.
% \end{definition}

% \begin{definition}
% A population preference distribution with respect to a random utility model $\Pi$ specifies a distribution 
% \end{definition}

Our inclusive objective (\ref{eq:inclusive-objective}) is meant to engender AIs that represent variations in preferences across the population.
At the extreme, we could hope for what we will refer to as a {\it maximally inclusive} AI, which fully reflects the population preference distribution.

\begin{definition}
% An AI is {\bf maximally inclusive} with respect to a population choice distribution $P$ if, when prompted with any $x \in \mathcal{X}$, its response is distributed according to $P(\cdot \mid x)$.
An AI $A$ is {\bf maximally inclusive} with respect to a population choice distribution $\overline{P}$ if, for all $x\in\mathcal{X}$ and $y\in\mathcal{Y}$, $P_A(y \mid x) = \overline{P}(y \mid x)$.
\end{definition}
The above definition implies that for a given population choice distribution, a maximally inclusive AI exists and is unique. In what follows, we will fix a population choice distribution $\overline{P}$, and denote the corresponding maximally inclusive AI as $A^\star$.
Our next theorem shows that, if we use the \theirs{} objective, AIs that produce deterministic responses for each prompt are always favored.

\begin{restatable}{theorem}{deterministicagglomerativeai}
\label{th:deterministic-agglomerative-ai}
For any generative AI $A$, there exists a generative AI $B$, such that $P_B(y \mid x) \in \{0,1\}$ for all $(x,y)\in\mathcal{X}\times\mathcal{Y}$, and 
$$
\mathbb{P}( S_B^{\rm agg} \geq S_A^{\rm agg}) \geq 1/2
\quad\text{and}\quad
\E[S_B^{\mathrm{agg}}] \geq \E[S_A^{\mathrm{agg}}].
$$
\end{restatable}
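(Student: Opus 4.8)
The plan is to reduce the claim to a single trial and then exploit a monotonicity property of the binomial distribution. First, note that in any head-to-head contest the annotator expresses a single preference (ties broken by one coin), so $L_t(B,A) = 1 - L_t(A,B)$ and hence $S_B^{\mathrm{agg}} + S_A^{\mathrm{agg}} = T$ identically. Setting $D_t := L_t(B,A) - L_t(A,B) = 2L_t(B,A) - 1 \in \{-1,+1\}$, the variables $D_1,\dots,D_T$ are i.i.d.\ (prompts, responses, and annotators are drawn afresh in each trial), so $S_B^{\mathrm{agg}} = \sum_{t=1}^T \mathbf{1}[D_t = 1] \sim \mathrm{Binomial}(T,p)$ with $p := \Pr(D_t = 1)$, and $S_A^{\mathrm{agg}} = T - S_B^{\mathrm{agg}}$. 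Both conclusions of the theorem then follow once we exhibit a deterministic $B$ with $p \geq 1/2$: the expectation bound is immediate since $\E[S_B^{\mathrm{agg}}] = Tp \geq T(1-p) = \E[S_A^{\mathrm{agg}}]$, while $\{S_B^{\mathrm{agg}} \geq S_A^{\mathrm{agg}}\} = \{S_B^{\mathrm{agg}} \geq T/2\}$, whose probability is at least $\Pr(\mathrm{Binomial}(T,1/2) \geq T/2)$ by stochastic monotonicity of the binomial in $p$, and this is $\geq 1/2$ (equal to $1/2$ when $T$ is odd, strictly larger when $T$ is even, by symmetry of $\mathrm{Binomial}(T,1/2)$ about $T/2$).

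\textbf{Construction of $B$.}
For a prompt $x$ and responses $y,y'$, let $g(x,y,y')$ be the probability that a random annotator weakly prefers $y$ to $y'$ given prompt $x$, counting a tie as one half; then $g(x,y,y') + g(x,y',y) = 1$ for all $y,y'$, and $g(x,y,y) = 1/2$. If $B$ answers $f(x)$ deterministically, the conditional probability that $B$'s response is preferred to $A$'s, given $X = x$, equals $w(x,f(x))$ where $w(x,y) := \sum_{y'} P_A(y' \mid x)\, g(x,y,y')$. The key observation is the antisymmetry identity
\[
\sum_{y} P_A(y \mid x)\, w(x,y) = \sum_{y,y'} P_A(y\mid x)P_A(y'\mid x)\, g(x,y,y') = \tfrac12 \sum_{y,y'} P_A(y\mid x)P_A(y'\mid x) = \tfrac12 ,
\]
so for every $x$ there exists a response with $w(x,\cdot) \geq 1/2$. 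Define $f(x) \in \argmax_{y\in\mathcal{Y}} w(x,y)$ and let $B$ be the AI with $P_B(\cdot \mid x) = \delta_{f(x)}$; this $B$ has $P_B(y\mid x)\in\{0,1\}$ and $p = \E_X[w(X,f(X))] \geq 1/2$, completing the construction.

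\textbf{Expected main obstacle.}
I expect the substantive point to be the antisymmetry identity and the realization that it pins the $P_A$-average of win-probabilities to exactly $1/2$ — which is what forces some pure response to fare at least as well against $A$ as $A$'s own mixed response does. The only other place that needs care is the passage from the mean inequality $\E[S_B^{\mathrm{agg}}] \geq \E[S_A^{\mathrm{agg}}]$ to the probability statement $\Pr(S_B^{\mathrm{agg}} \geq S_A^{\mathrm{agg}}) \geq 1/2$: a comparison of means is not enough, and one genuinely needs the binomial structure (stochastic ordering in $p$ together with the exact symmetry at $p = 1/2$, treating $T$ even and odd separately). I would also verify the modeling convention that the $A/B$ and $B/A$ comparisons use a common tie-breaking coin, since that is precisely what makes $D_t \in \{-1,+1\}$ and the binomial reduction exact.
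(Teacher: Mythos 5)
Your proposal is correct and follows essentially the same route as the paper: the heart of both arguments is that the $P_A$-average of the win probability against $A$'s own response distribution is exactly $1/2$ (you state this as the antisymmetry identity; the paper derives it by contradiction via an i.i.d.\ copy of $Y_A(x)$), so some deterministic response attains win probability at least $1/2$ against $A$ for every prompt. Your write-up is in fact slightly more careful than the paper's in two spots it glosses over --- the tie-as-one-half convention that makes $g(x,y,y')+g(x,y',y)=1$ exact, and the binomial symmetry/monotonicity step needed to pass from $p\ge 1/2$ per trial to $\mathbb{P}(S_B^{\rm agg}\ge S_A^{\rm agg})\ge 1/2$ --- but these are refinements, not a different proof.
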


Next, we will demonstrate that, under mild assumptions, the maximally inclusive AI scores at least as well as any other AI in terms of the inclusive objective.
For this purpose, we first introduce a particular random utility model, the {\it multinomial logit}, which is a very general model of discrete choice \citep{mcfadden2000mixed} used in classic literature on ranking through pairwise comparison~\citep{bradley1952rank}.
The multinomial logit model posits a population utility function $\overline{u}:\mathcal{X}\times\mathcal{Y}\to \mathbb{R}$.
For any generative AI $A$, conditioned on the prompt-response pair $(x, y)$, the random utility $U_{t}(x, y)$ is such that
\[
    U_{t}(x, y) = \overline{u}(x, y) + \xi_{t,x,y},
\]
where each $\xi_{t,x,y}$ is an independent standard Gumbel random variable.
Hence, if two AIs $A$ and $B$ face off in a head-to-head-contest,
\begin{align*}
\mathbb{P}\Big(L_t(A,B) = 1\bigm\vert X_t, Y_{t,A}, Y_{t,B}\Big) &= \mathbb{P}\Big(U_t(X_t, Y_{t,A}) \geq U_t(X_t, Y_{t,B}) \bigm\vert X_t, Y_{t,A}, Y_{t,B}\Big)\\ 
&= \frac{e^{\overline{u}(X_t, Y_{t,A})}}{e^{\overline{u}(X_t, Y_{t,A})} + e^{\overline{u}(X_t, Y_{t,B})}}.
\end{align*}

The following result establishes that, if preferences across the population are consistent with a multinomial logit model then, in terms of our inclusive objective, a maximally inclusive AI will fare at least as well as any other in expectation.

\begin{restatable}{theorem}{inclusiveoptimal}
\label{th:inclusive-optimal}
Under the multinomial logit model and for any generative AI $B$, $\E[S^{\mathrm{inc}}(A^\star, B)] \geq \E[S^{\mathrm{inc}}(B, A^\star)]$.
\end{restatable}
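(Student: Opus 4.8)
The plan is to reduce the comparison to a single trial, re-express each summand of the inclusive objective in terms of the annotator's \emph{winning} response, and then recognize the per-trial gap between the two head-to-head orderings as a Kullback--Leibler divergence.

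First I would reduce to one trial. Since the prompts $(X_t)$ are i.i.d.\ and $S^{\mathrm{inc}}$ is a sum of $T$ terms, linearity of expectation gives $\E[S^{\mathrm{inc}}(A^\star,B)] = T\,\E[s(A^\star,B)]$ and $\E[S^{\mathrm{inc}}(B,A^\star)] = T\,\E[s(B,A^\star)]$, where $s(\cdot,\cdot)$ denotes a generic summand. It therefore suffices to show $\E[s(A^\star,B)] \ge \E[s(B,A^\star)]$. I would also observe that, under the multinomial logit model, $\overline{P}(y\mid x) = e^{\overline{u}(x,y)}/\sum_{y'} e^{\overline{u}(x,y')} > 0$ for every $(x,y)$ in the finite set $\mathcal{X}\times\mathcal{Y}$, so $s(A^\star,B)$ is bounded and $\E[s(A^\star,B)]$ is finite; and we may assume $\E[s(B,A^\star)] > -\infty$, as the claim is otherwise immediate.

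Second, I would record the one consequence of the multinomial logit assumption that I need. Because a uniformly sampled member of the population selects the utility-maximizing response and the noise $\xi$ is standard Gumbel, the Gumbel--max identity gives $\overline{P}(y\mid x) \propto e^{\overline{u}(x,y)}$; combined with the displayed formula for $\mathbb{P}(L_t = 1 \mid X_t, Y_{t,A}, Y_{t,B})$ in the excerpt, this yields, for any two responses $a,b$,
$$
\mathbb{P}\bigl(\text{annotator prefers } a \text{ over } b \mid X_t = x\bigr) \;=\; \frac{e^{\overline{u}(x,a)}}{e^{\overline{u}(x,a)} + e^{\overline{u}(x,b)}} \;=\; \frac{\overline{P}(a\mid x)}{\overline{P}(a\mid x) + \overline{P}(b\mid x)}.
$$

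Third comes the structural step. Writing $W$ for the response the annotator prefers among the two presented responses, one checks directly from (\ref{eq:inclusive-objective}) that a generic summand in $S^{\mathrm{inc}}(C,D)$ equals
$$
s(C,D) \;=\; \log\frac{P_C(W\mid X_t)}{P_C(Y_{t,C}\mid X_t) + P_C(Y_{t,D}\mid X_t)},
$$
which depends on the two candidate responses only through the unordered pair $\mathcal{P} := \{Y_{t,C}, Y_{t,D}\}$. Now compare the orderings $(A^\star,B)$ and $(B,A^\star)$: in both, the candidate pair consists of one draw $Y\sim\overline{P}(\cdot\mid X_t)$ and one independent draw $Y'\sim P_B(\cdot\mid X_t)$, so $\mathcal{P}$ has the same law in both cases. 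Moreover, by the identity of the previous step, the winner's conditional law given $X_t = x$ and $\mathcal{P} = \{a,b\}$ is $\overline{P}$ restricted to $\{a,b\}$ and renormalized --- call it $\overline{P}(\cdot\mid x,\mathcal{P})$ --- in \emph{both} orderings, since the annotator's utilities are governed by $\overline{u}$ irrespective of which AI produced which response (the degenerate case $a = b$ gives a point mass in both orderings and contributes nothing). The \emph{only} difference between the two orderings is whose probabilities appear in the logarithm: $s(A^\star,B) = \log\overline{P}(W\mid x,\mathcal{P})$ whereas $s(B,A^\star) = \log P_B(W\mid x,\mathcal{P})$, where $P_B(\cdot\mid x,\mathcal{P})$ is $P_B$ restricted to $\{a,b\}$ and renormalized (well-defined a.s.\ under the reduction $\E[s(B,A^\star)] > -\infty$, since otherwise the relevant summand has conditional expectation $-\infty$ on a positive-probability event).

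Fourth and finally, taking the conditional expectation over $W\sim\overline{P}(\cdot\mid x,\mathcal{P})$,
$$
\E\bigl[s(A^\star,B) - s(B,A^\star)\,\bigm|\,X_t = x,\ \mathcal{P}\bigr] \;=\; \sum_{w\in\mathcal{P}} \overline{P}(w\mid x,\mathcal{P})\log\frac{\overline{P}(w\mid x,\mathcal{P})}{P_B(w\mid x,\mathcal{P})} \;=\; D_{\mathrm{KL}}\!\bigl(\overline{P}(\cdot\mid x,\mathcal{P}) \,\|\, P_B(\cdot\mid x,\mathcal{P})\bigr) \;\ge\; 0 .
$$
Averaging over $\mathcal{P}$ and $X_t$ and multiplying by $T$ proves the theorem. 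I expect the main obstacle to be bookkeeping rather than analysis: carefully disentangling the roles of $L_t(A^\star,B)$ versus $L_t(B,A^\star)$ and of ``first/second response'' in (\ref{eq:inclusive-objective}), so as to justify that the winner's conditional law given the candidate pair is the \emph{same} $\overline{P}(\cdot\mid x,\mathcal{P})$ in both orderings and that the objective is symmetric in the two presented responses --- together with the measure-theoretic edge cases where $P_B$ vanishes on part of the support of $\overline{P}$. Once that is in place, the KL divergence, and hence nonnegativity, drops out.
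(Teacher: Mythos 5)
Your proposal is correct and takes essentially the same route as the paper's own proof: condition on the prompt and the two candidate responses, use the multinomial logit structure to identify the winner's conditional law with $\overline{P}$ renormalized on the pair (the paper's $Q_{t,\star}$), and recognize the conditional expected per-trial gap as the binary KL divergence $D_{\rm KL}\big(\bm{Q}_{t,\star}\|\bm{Q}_{t,B}\big)\geq 0$. Your rewriting in terms of the winner $W$ and the unordered pair is merely a repackaging of the paper's centering of $L_t$ around $Q_{t,\star}$, with slightly more explicit care for the edge case where $P_B$ vanishes on the support of $\overline{P}$.
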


\subsection{Downstream Decision Making}

In this section, we consider a downstream decision problem and demonstrate how inclusive AIs can support more effective decisions than agglomerative ones.  For simplicity and clarity of exposition, we consider a setting with a fixed prompt $x\in\mathcal{X}$ such that $X_1=X_2=\dots=X_T=x$, and a finite set of actions $\mathcal{A}$. 
The decision-maker aims to choose an action $a\in\mathcal{A}$ that maximizes $V(a) = \mathbb{E}\big[v(a, Y)\big]$, where $Y$ represents the favorite response of a random individual drawn uniformly from the population and $v:\mathcal{A}\times \mathcal{Y}\mapsto [0,1]$ is a value function.  
Instead of querying humans, the decision-maker repeatedly presents an AI $A$ with the prompt $x$, generating $T$ responses $(Y_{t, A}:t=1,\dots,T)$.  The decision-maker then selects an action $\hat{a}_T$ that maximizes the empirical mean:
\[
    \hat{a}_T \in \argmax_{a\in\mathcal{A}} \left\{\frac{1}{T}\cdot\sum_{t=1}^T v(a, Y_{t, A})\right\}.
\]
Our next two results demonstrate that inclusive AIs lead to near-optimal decisions whereas agglomerative AIs can fail miserably.  We quantify the degree of inclusivity in terms of the KL divergence $D_{\rm KL}(Y_{t,A^\star}(x) \| Y_{t,A}(x))$ between the response distributions of an arbitrary AI $A$ and a maximally inclusive $A^\star$.
% \begin{restatable}{theorem}{decision}
% \label{thm:decision}
% \begin{enumerate}[label=(\roman*)]
% \item For any AI $A$ and $\delta>0$,
% \[
% V(\hat{a}_T) \geq \max_{a\in\mathcal{A}} V(a) - 2\sqrt{D_{\rm KL}(Y_{t,A^\star}(x)\|Y_{t,A}(x))} - \sqrt{\frac{2}{T}\log\left( \frac{2|\mathcal{A}|}{\delta}\right)},
% \]
% with probability at least $1-\delta$.
% \item There exists an AI $A$ and value function $v$ such that 
% \[\E[S^{\mathrm{agg}}(A,A^\star)] \geq \E[S^{\mathrm{agg}}(A^\star,A)] \quad \text{and} \quad
% V(\hat{a}_T) \leq \max_{a\in\mathcal{A}} V(a) - \frac{1}{3}.
% \]
% \end{enumerate}
% \end{restatable}

\begin{restatable}{theorem}{decisionone}
\label{thm:decision}
For any AI $A$ and $\delta>0$,
\[
V(\hat{a}_T) \geq \max_{a\in\mathcal{A}} V(a) - 2\sqrt{D_{\rm KL}(Y_{t,A^\star}(x)\|Y_{t,A}(x))} - \sqrt{\frac{2}{T}\log\left( \frac{2|\mathcal{A}|}{\delta}\right)},
\]
with probability at least $1-\delta$.
\end{restatable}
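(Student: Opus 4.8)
The plan is the standard decomposition of the suboptimality of $\hat a_T$ into a \emph{distribution-shift} term, controlled by the KL divergence between $A$ and $A^\star$, and an \emph{empirical-estimation} term, controlled by a concentration argument. Fix the prompt $x$ and, for any AI $A$, abbreviate $V_A(a) := \E[v(a,Y_{t,A})] = \sum_y P_A(y\mid x)v(a,y)$ and $\hat V_T(a) := \tfrac1T\sum_{t=1}^T v(a,Y_{t,A})$. Because $A^\star$ is maximally inclusive, $P_{A^\star}(\cdot\mid x)=\overline P(\cdot\mid x)$ is exactly the law of the favorite response $Y$ of a uniformly random individual, so $V(a)=V_{A^\star}(a)$ for every $a$; let $a^\star\in\argmax_a V(a)$. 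I would then split
\[
V(a^\star) - V(\hat a_T) = \big(V_{A^\star}(a^\star) - V_A(a^\star)\big) + \big(V_A(a^\star) - V_A(\hat a_T)\big) + \big(V_A(\hat a_T) - V_{A^\star}(\hat a_T)\big),
\]
and bound the first and third terms deterministically and the middle term with high probability.

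For the first and third terms, since $v$ is $[0,1]$-valued, for any fixed $a$ one has $|V_{A^\star}(a) - V_A(a)| = \big|\sum_y(\overline P(y\mid x)-P_A(y\mid x))v(a,y)\big| \le \|\overline P(\cdot\mid x) - P_A(\cdot\mid x)\|_{\rm TV}$, and Pinsker's inequality bounds the total variation distance by $\sqrt{\tfrac12 D_{\rm KL}(Y_{t,A^\star}(x)\|Y_{t,A}(x))}$; the two terms together therefore contribute at most $\sqrt{2\,D_{\rm KL}(Y_{t,A^\star}(x)\|Y_{t,A}(x))} \le 2\sqrt{D_{\rm KL}(Y_{t,A^\star}(x)\|Y_{t,A}(x))}$, matching the first error term in the statement. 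For the middle term I would insert and subtract the empirical means, $V_A(a^\star) - V_A(\hat a_T) = \big(V_A(a^\star) - \hat V_T(a^\star)\big) + \big(\hat V_T(a^\star) - \hat V_T(\hat a_T)\big) + \big(\hat V_T(\hat a_T) - V_A(\hat a_T)\big)$, where the middle difference is nonpositive by the defining optimality of $\hat a_T$, so the whole term is at most $2\max_{a\in\mathcal A}|\hat V_T(a) - V_A(a)|$.

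It then remains to control this uniform deviation. For each fixed $a$, $\hat V_T(a)$ is an average of $T$ i.i.d.\ $[0,1]$-valued variables with mean $V_A(a)$, so Hoeffding's inequality together with a union bound over $\mathcal A$ gives $\mathbb{P}\big(\max_{a\in\mathcal A}|\hat V_T(a) - V_A(a)| > \epsilon\big) \le 2|\mathcal A|e^{-2T\epsilon^2}$; choosing $\epsilon = \sqrt{\tfrac{1}{2T}\log(2|\mathcal A|/\delta)}$ makes the failure probability equal to $\delta$, and on the complementary event the middle term is at most $2\epsilon = \sqrt{\tfrac2T\log(2|\mathcal A|/\delta)}$. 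Adding the three bounds on this event of probability at least $1-\delta$ yields the claimed inequality.

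There is no deep obstacle here; the points that need care are that $\hat a_T$ is data-dependent, so the concentration step must be applied \emph{uniformly} over $\mathcal A$ (a single-action Hoeffding bound would not suffice for a random comparator), and that one should verify the Pinsker constant so that the distribution-shift contribution is genuinely at most $2\sqrt{D_{\rm KL}}$ rather than only $\sqrt{2 D_{\rm KL}}$ — which it is, since $\sqrt2 \le 2$. Everything else is routine triangle-inequality bookkeeping.
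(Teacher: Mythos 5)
Your proposal is correct and follows essentially the same route as the paper's proof: Pinsker's inequality to convert the KL divergence into a $\sqrt{\epsilon/2}$ bound on $|\E[v(a,Y_{t,A})]-V(a)|$, Hoeffding plus a union bound over $\mathcal{A}$ for the empirical deviations, and the standard ERM decomposition around $\hat a_T$. The only cosmetic difference is that you bound the expectation gap directly via the total variation distance between the response distributions, whereas the paper first pushes forward through $v(a,\cdot)$ with the data-processing inequality before applying Pinsker; both yield the same constants.
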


\begin{restatable}{theorem}{decisiontwo}
\label{thm:decision2}
There exists an AI $A$ and value function $v$ such that 
\[\E[S^{\mathrm{agg}}(A,A^\star)] \geq \E[S^{\mathrm{agg}}(A^\star,A)] \quad \text{and} \quad
V(\hat{a}_T) \leq \max_{a\in\mathcal{A}} V(a) - \frac{1}{3}.
\]
\end{restatable}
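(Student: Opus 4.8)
The plan is to exhibit an explicit instance. I would fix a single prompt $x$, a binary response set $\mathcal{Y}=\{1,2\}$, and two actions $\mathcal{A}=\{a_1,a_2\}$. Take the population choice distribution $\overline{P}(1\mid x)=3/5$, $\overline{P}(2\mid x)=2/5$; under the multinomial logit model this is realized by $\overline{u}(x,1)=\log(3/2)$, $\overline{u}(x,2)=0$, and the maximally inclusive AI satisfies $P_{A^\star}(\cdot\mid x)=\overline{P}(\cdot\mid x)$. Let $A$ be the agglomerative AI that deterministically emits the more popular response, $P_A(1\mid x)=1$. Finally set $v(a_1,1)=0$, $v(a_1,2)=1$, $v(a_2,1)=1/9$, $v(a_2,2)=0$, all of which lie in $[0,1]$.

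For the agglomerative inequality, I would first note that the two AIs' conditional win probabilities sum to one, so $\E[S^{\mathrm{agg}}(A,A^\star)]+\E[S^{\mathrm{agg}}(A^\star,A)]=T$, and it suffices to check $\E[S^{\mathrm{agg}}(A,A^\star)]\ge T/2$. Since $A$ always plays response $1$ and, under the logit model, $\Pr(L_t=1\mid Y_{t,A}=1,\,Y_{t,A^\star}=y)=e^{\overline{u}(x,1)}/(e^{\overline{u}(x,1)}+e^{\overline{u}(x,y)})\ge 1/2$ for $y\in\{1,2\}$ (because $\overline{u}(x,1)\ge\overline{u}(x,y)$), averaging over $Y_{t,A^\star}\sim\overline{P}$ gives $\E[S^{\mathrm{agg}}(A,A^\star)]=T\big(\tfrac35\cdot\tfrac12+\tfrac25\cdot\tfrac35\big)=\tfrac{27}{50}T>T/2$, so in particular $\E[S^{\mathrm{agg}}(A,A^\star)]\ge\E[S^{\mathrm{agg}}(A^\star,A)]$.

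For the decision-making failure, I would use the fact that $A$ is deterministic: every sampled response $Y_{t,A}$ equals $1$, so the empirical mean $\tfrac1T\sum_{t=1}^T v(a,Y_{t,A})$ equals $v(a,1)$ exactly, with no sampling noise for any $T$. Because $v(a_2,1)=1/9>0=v(a_1,1)$, the unique empirical maximizer is $\hat a_T=a_2$. Then $V(a_1)=\overline{P}(2\mid x)=2/5$ and $V(a_2)=\overline{P}(1\mid x)\cdot\tfrac19=1/15$, so $\max_a V(a)=2/5$ whereas $V(\hat a_T)=1/15=2/5-1/3$, which gives the claimed bound (with equality).

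The construction is essentially the whole proof, and the step I would be most careful about is the joint calibration of the minority mass $\overline{P}(2\mid x)$ and the value $v(a_2,1)$. A deterministic-on-majority AI can mislead the decision-maker by at most the minority mass, which must stay below $1/2$, yet forcing $\hat a_T=a_2$ demands a strictly positive gap $v(a_2,1)-v(a_1,1)$ that eats into the attainable suboptimality. Placing the minority mass slightly above $1/3$ (here $2/5$) leaves exactly enough room to reach the $1/3$ bound while keeping $A$ strictly favored by the agglomerative objective; one should also keep $\overline{P}$ nondegenerate, since a Dirac $\overline{P}$ would make $A^\star$ itself agglomerative and the induced decision perfectly fine.
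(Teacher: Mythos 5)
Your proposal is correct and follows essentially the same route as the paper's proof: a single prompt, two responses, two actions, an agglomerative AI that deterministically emits the majority response, and a value function calibrated so that the empirical maximizer is the wrong action by exactly $1/3$. The only differences are numerical — the paper uses a uniform population choice distribution (so the agglomerative comparison holds with equality) and values $v(a_1,y_1)=1-\beta$, $v(a_1,y_2)=1$, $v(a_2,y_1)=1$, $v(a_2,y_2)=0$ with $\beta=1/3$, whereas your $3/5$--$2/5$ population makes the agglomerative preference strict, a mild strengthening that does not change the argument.
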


Intuitively, Theorem \ref{thm:decision} highlights how inclusive AIs support near-optimal decision making, as the second term of the lower bound is zero for a maximally inclusive AI (by virtue of capturing the population choice distribution) whereas the final term vanishes as the number of responses $T$ grows. For an arbitrary generative AI that is not necessarily the maximally inclusive AI, the result shows a graceful degradation in the optimality of downstream decisions as the AI response distribution deviates from the population choice distribution. Meanwhile, Theorem \ref{thm:decision2} demonstrates that, at least for some examples, the preferred model under the agglomerative objective necessarily yields sub-optimal, unimprovable downstream decisions, regardless of the number $T$ of responses.

\section{Closing Remarks}

In this work, we have outlined shortcomings of the model selection procedure driving recent developments in generative AI. In particular, we recognize that the underlying criterion encourages the finetuning and subsequent selection of agglomerative AIs that presume homogeneity in the preferences of the overall population and deterministically produce a single best response for each input prompt. To remedy this, we have proposed an alternative criterion and, through simple examples alongside a corroborating theoretical analysis, demonstrated that the resulting inclusive AIs retain the requisite heterogeneity in responses needed for downstream specialization to individuals or sub-populations. We leave to future work the open question of how well this theory translates into practice and yields successful empirical instantiations of inclusive AIs. In the remainder of this section, we take a step back to situate our work within the broader context of model alignment in AI.

While our work broadly falls in with a burgeoning literature around ensuring the alignment of AIs with societal values, research on this rich topic is extremely varied, reflecting the considerable number of potential issues that can arise from the pervasive use of these generative AIs~\citep{bender2021dangers,weidinger2021ethical,tamkin2021understanding,weidinger2022taxonomy,ganguli2022red,ngo2022alignment}. We refer readers to the excellent preceding surveys for a comprehensive treatment of the topic at large, as well as more targeted works spanning complex, critical issues including bias against marginalized communities~\citep{birhane2022power}, dissemination of misinformation~\citep{lin-etal-2022-truthfulqa}, and consensus on controversial topics (for example, moral or political issues)~\citep{hendrycks2021aligning,kasirzadeh2022conversation,bakker2022fine}. Taking an orthogonal line to these works, our focus in this paper is on the methodology surrounding the evaluation of such generative AIs and the objective functions used to determine which model yields the most preferable responses~\citep{askell2021general,bommasani2022picking,abramson2022improving}. In the context of Reinforcement Learning from Human Feedback (RLHF)~\citep{ouyang2022training,huggingface2022rlhf}, this design choice ultimately governs the underlying reward function optimized when reinforcement learning is applied to subsequently finetune the pretrained models and, accordingly, has profound impact on whether the resulting models produce outputs that align with our expectations. 

While the standard regime for model selection from human preferences~\citep{christiano2017deep} operates under the classic Bradley-Terry model~\citep{bradley1952rank} of assuming that preferences in the selected options of human evaluators reflect their underlying utility preferences, \citet{abramson2022improving} consider scenarios where highly disparate experiences may impair an evaluator's ability to clearly discern utility and, as a result, yield inaccurate preference judgements; they propose a modification to the Bradley-Terry model that presents evaluators with comparisons from within the same temporal interaction. \citet{askell2021general} take a step back and compare the preference modeling methodology on the whole against imitation learning and binary classification approaches to eliciting judgements for model alignment; overall, they find preference model to be the more suitable technique relative to standard imitation (supervised) learning. In contrast to these prior works, our contributions are targeted towards elucidating the collapse and homogenization of response distributions produced by generative AIs under the current, widely-adopted model selection technique for RLHF, something noted but not rectified in recent work~\citep{bakker2022fine,bommasani2022picking}. 

Our work most directly aligns with the focus of \citet{bommasani2022picking} who, while not operating in the RLHF setting, study homogeneity resulting from particular instances of generative AIs and, specifically, highlight the precise choice of finetuning mechanism as a strong influencer of this homogeneity. In keeping with their findings, our object of study is the objective function that underlies model selection in the RLHF paradigm which, in turn, exerts strong impact on the subsequent finetuning via reinforcement learning. We outline a more precise form of homogeneity in the collapse of the response distribution to the population mean, rather than including the overall population distribution. Most recently, \citet{bai2022constitutional} highlight a simple but effective technique for refining the RLHF finetuning stage in order to mitigate harmful responses while still maintaining helpful, salient outputs for input prompts; much like their work,  we hope that the ideas presented here afford a viable path towards rectifying some of the  weaknesses of RLHF~\citep{perez2022discovering}, expanding model capabilities for everyone without exacerbating model biases or excluding certain demographics within the population.

\bibliographystyle{plainnat}
\bibliography{references}

\appendix

\section{Proofs}

\deterministicagglomerativeai*
\begin{proof}
Let $x\in\mathcal{X}$ be a fixed prompt.  
For each generative AI $A$, letting $Y_A(x)$ be the random response generated by $A$, we first show that we can always find a deterministic $y_0(x)\in\mathcal{Y}$, such that for all $t=1,2,\dots,T$,
\begin{equation}
    \label{eq:domination}
    \mathbb{P}\Big( U_t\big(x, y_0(x)\big) \geq U_t\big(x, Y_A(x)\big)\Big) \geq\frac{1}{2},
\end{equation}
In fact, suppose that
\begin{equation*}
    \mathbb{P}\Big( U_t\big(x, y(x)\big) \geq U_t\big(x, Y_A(x)\big)\Big) < \frac{1}{2}
\end{equation*}
for some $t$ and all $y\in\mathcal{Y}$, then we can consider a $\mathcal{Y}$-valued random variable $Y'$ that is i.i.d.~as $Y_A(x)$.  Apparently $\mathbb{P}\big( U(x,Y') \geq U(x, Y_A(x)) \big) = 1/2$, but at the same time,
\[
    \mathbb{P}\big( U_t(x,Y') \geq U_t(x, Y_A(x)) \big) = \sum_{y\in\mathcal{Y}} \Big[\mathbb{P}\big( U_t(x,y) \geq U_t(x, Y_A(x)) \big)\cdot \mathbb{P}\big( Y' = y) \Big]< \frac{1}{2},
\]
leading to a contradiction.

Note that \eqref{eq:domination} implies that
\[
    \mathbb{P}\Big( U_t\big(x, y_0(x)\big) > U_t\big(x, Y_A(x)\big)\Big) \geq\mathbb{P}\Big( U_t\big(x, y_0(x)\big) < U_t\big(x, Y_A(x)\big)\Big).
\]
We can consider an AI $B$ such that
\[
    P_{B}(y \mid x) = 
    \begin{cases}
        1 & \text{if $y=y_0(x)$}\\
        0 & \text{otherwise}
    \end{cases},\quad \forall x\in\mathcal{X}.
\]
Apparently $P_{B}(y\mid x)\in\{0,1\}$ for all $x$ and $y$.  Meanwhile, for each $t=1,2,\dots$,
\begin{eqnarray}
    \mathbb{P}\Big(L_t(B, A) = 1 \Big)
    &=& \mathbb{P}\Big(U_t(X_t, y_0(X_t)) > U_t(X_t, Y_A(X_t)) \Big)\nn\\
        &&+ \frac{1}{2} \mathbb{P}\Big(U_t(X_t, y_0(X_t)) = U_t(X_t, Y_A(X_t)) \Big)\nn\\
    &\geq& \mathbb{P}\Big(U_t(X_t, y_0(X_t)) < U_t(X_t, Y_A(X_t)) \Big)\nn\\
        &&+ \frac{1}{2} \mathbb{P}\Big(U_t(X_t, y_0(X_t)) = U_t(X_t, Y_A(X_t)) \Big)\nn\\
    &=& \mathbb{P}\Big(L_t(B, A)=0 \Big).\nn 
\end{eqnarray}
Hence $\mathbb{P}(L_t(B, A) = 1 )\geq 1/2$ for each $t$, leading to
\[
    \mathbb{P}\Big( S^{\rm agg}_{B} \geq S^{\rm agg}_{A}\Big) \geq \frac{1}{2},
\]
as we desire.  Since $L_t(B, A) + L_t(A, B) = 1$, this also means that
\[
    \E\big[S^{\rm agg}_{B} \big] \geq \E\big[S^{\rm agg}_{A} \big].
\]
\end{proof}

\inclusiveoptimal*
\begin{proof}
Let $\overline{u}$ be the population utility function in the multinomial logit model.
Such population utility function $\overline{u}$ gives rise to a population choice distribution $\overline{P}_{\overline{u}}$.
To see that, we can consider a fixed prompt $x$ and a random individual that is asked to pick out the best response among all possible responses $y\in\mathcal{Y}$.  According to the multinomial logit model, the probability that $y_0$ is chosen at $t$ is given by
\[
    \mathbb{P}\Big( \overline{u}(x, y_0) + \xi_{t, x, y_0} = \max_{y\in\mathcal{Y}} \big\{ \overline{u}(x,y) + \xi_{t,x,y}\big\}\Big)
    = \frac{e^{\overline{u}(x,y_0)}}{\sum_{y\in\mathcal{Y}} e^{\overline{u}(x,y)}}.
\]
Since the individual is drawn uniformly at random from the distribution, we can conclude that the fraction of population that considers $y_0$ the best response to $x$ is 
\[
    \overline{P}_{\overline{u}}(y_0\mid x) = \frac{e^{\overline{u}(x,y_0)}}{\sum_{y\in\mathcal{Y}} e^{\overline{u}(x,y)}}.
\]

According to the definition of maximal inclusiveness, for all prompt-response pair $(x,y)\in\mathcal{X}\times\mathcal{Y}$, 
\[
    P_{A^\star}(y \mid x) = \overline{P}_{\overline{u}}(y \mid x).
\]
Let $B$ be an arbitrary generative AI.
At timestep $t$, let $Y_{t, A^\star}$ and $Y_{t, B}$ be the responses generated by $A^\star$ and $B$, respectively.   Also let
\[
    Q_{t, \star} = \frac{P_{A^\star}(Y_{t,A^\star} \mid X_t)}{P_{A^\star}(Y_{t,A^\star} \mid X_t) + P_{A^\star}(Y_{t,B} \mid X_t)}
\]
and
\[
    Q_{t, B} = \frac{P_B(Y_{t,A^\star} \mid X_t)}{P_B(Y_{t,A^\star} \mid X_t) + P_B(Y_{t,B} \mid X_t)}
\]
be the relative probabilities that AIs $A^\star$ and $B$ assign to $Y_{t, A^\star}$, respectively.
Using $L_t$, $S^{\rm inc}_{A^\star}$, and $S^{\rm inc}_{B}$ as shorthands for $L_t(A^\star, B)$, $S^{\rm inc}(A^\star, B)$, and $S^{\rm inc}(B, A^\star)$, respectively, we arrive at
\begin{eqnarray}
    \label{eq:tower-property}
    &&\mathbb{E} \Big[ S^{\rm inc}_{A^\star} - S^{\rm inc}_{B}\Big]\nn\\
    &=&  \sum_{t=1}^T\mathbb{E} \left[ L_t\cdot \log \frac{Q_{t, \star}}{Q_{t, B}} + (1-L_t) \cdot\log \frac{1-Q_{t,\star}}{1-Q_{t,B}}\right]\nn\\
    &=& \sum_{t=1}^T\mathbb{E}\left[\mathbb{E} \left[ L_t\cdot \log \frac{Q_{t, \star}}{Q_{t, B}} + (1-L_t) \cdot\log \frac{1-Q_{t,\star}}{1-Q_{t,B}}\Bigg| X_t, Y_{t, B}, Y_{t, A^\star}\right]\right].
\end{eqnarray}
According to the multinomial logit model, conditioned on $X_t$, $Y_{t,B}$, and $Y_{t, A^\star}$, $\mathbb{P}(L_t=1) = Q_{t, \star}$.  Letting $Z_t = L_t - Q_{t, \star}$, we have that $$\E\Big[Z_t\big|X_t, Y_{t, B}, Y_{t, A^\star}\Big] = 0.$$
Using the fact that $Q_{t, B}$ and $Q_{t, \star}$ are measurable with respect to the $\sigma$-algebra generated by $X_t$, $Y_{t,B}$, and $Y_{t, A^\star}$,
\begin{eqnarray}
    \label{eq:kl}
    &&\mathbb{E} \left[ L_t\cdot \log \frac{Q_{t, \star}}{Q_{t, B}} + (1-L_t) \cdot\log \frac{1-Q_{t,\star}}{1-Q_{t,B}}\Bigg| X_t, Y_{t, B}, Y_{t, A^\star}\right]\nn\\
    &=& \mathbb{E}\left[Z_t\cdot \left(\log \frac{Q_{t, \star}}{Q_{t, B}} + \log \frac{1-Q_{t,\star}}{1-Q_{t,B}}\right)\Bigg| X_t, Y_{t, B}, Y_{t, A^\star} \right] + D_{\rm KL}\big( \bm{Q}_{t, \star} \big\| \bm{Q}_{t, B} \big)\nn\\
    &=& D_{\rm KL}\big( \bm{Q}_{t, \star} \big\| \bm{Q}_{t, B} \big)\ \geq\ 0,
\end{eqnarray}
where we use $\bm{Q}_{t, \star}$ and $\bm{Q}_{t, B}$ to denote the distributions $(Q_{t, \star}, 1-Q_{t, \star})$ and $(Q_{t, B}, 1-Q_{t, B})$, respectively.  Plugging \eqref{eq:kl} into \eqref{eq:tower-property}, we arrive at
\[
    \mathbb{E} \Big[ S^{\rm inc}_{A^\star} - S^{\rm inc}_{B}\Big] \geq 0.
\]
\end{proof}

\decisionone*
\begin{proof}
    To avoid cluttering, we will fix a prompt $x\in\mathcal{X}$ in this proof, and write $Y_{t,A}(x)$ and $Y_{t,A^\star}(x)$ simply as $Y_{t,A}$ and $Y_{t,A^\star}$, respectively.
    Let $\epsilon = D_{\rm KL}\big(Y_{t,A^\star}\|Y_{t,A}\big)$.
    For each $a\in\mathcal{A}$, let $\hat{V}_T(a)=\frac{1}{T}\sum_{t=1}^T v(a, Y_{t,A})$.
    For each $a\in\mathcal{A}$, the function $v(a, \cdot)$ is a mapping from finite set $\mathcal{Y}$ to $[0,1]$.
    As a result, by the data-processing inequality, 
        \[
            D_{\rm KL}\Big(v(a, Y_{t,A^\star})\big\|v(a, Y_{t,A}) \Big) \leq D_{\rm KL}\big(Y_{t,A^\star}\|Y_{t,A}\big) = \epsilon.
        \]
        From Pinsker's inequality, letting $D_{\rm TV}(\cdot, \cdot)$ be the total variation distance, we have
        \[
            D_{\rm TV}\Big(v(a, Y_{t,A^\star}), v(a, Y_{t,A}) \Big) \leq \sqrt{\frac{1}{2}D_{\rm KL}\Big(v(a, Y_{t,A^\star})\big\|v(a, Y_{t,A}) \Big)} \leq \sqrt{\frac{\epsilon}{2}}.
        \]
        Since both $v(a, Y_{t,A^\star})$ and $v(a, Y_{t,A})$ are bounded in $[0,1]$, this leads to
        \begin{equation}
            \label{eq:expectation-difference}
            \Big|\mathbb{E}\big[v(a, Y_{t,A})\big]- V(a)\Big|\leq\sqrt{\frac{\epsilon}{2}},\quad\forall a\in\mathcal{A},
        \end{equation}
        where we used the fact that
        \[
            \mathbb{E}\big[v(a, Y_{t,A^\star})\big] = \mathbb{E}\big[v(a, Y)\big] = V(a).
        \]
        Meanwhile, following from the Azuma-Hoeffding inequality, for all $\zeta>0$,
        \[
            \mathbb{P}\left(\left| \hat{V}_T(a) - \mathbb{E}\big[v(a, Y_{t,A})\big] \right| \geq \zeta\right) \leq 2\exp\left( -2T\zeta^2\right),
        \]
        meaning that, with probability at least $1 - 2|\mathcal{A}|\exp(-2T\zeta^2)$,
        \begin{equation}
            \label{eq:union-bound}
            \left| \hat{V}_T(a) - \mathbb{E}\big[v(a, Y_{t,A})\big] \right| < \zeta,\quad\forall a\in\mathcal{A}.
        \end{equation}
        Together with \eqref{eq:expectation-difference}, the above event implies that
        \[
            \left| \hat{V}_T(a) - V(a) \right| < \sqrt{\frac{\epsilon}{2}}+\zeta,\quad\forall a\in\mathcal{A}.
        \]
        Letting $a^\star\in\argmax_{a\in\mathcal{A}} V(a)$, under the event represented by \eqref{eq:union-bound}, we have that
        \begin{eqnarray}
            &&\max_{a\in\mathcal{A}} V(a) - V\big( \hat{a}_T \big)\nn\\
            &=& \Big[V(a^\star) - \hat{V}_T(a^\star) \Big] + \Big[\hat{V}_T(a^\star) - \hat{V}_T\big( \hat{a}_T \big) \Big] + \Big[\hat{V}_T\big( \hat{a}_T \big) - V\big( \hat{a}_T \big)\Big]\nn\\
            &\leq& \Big[V(a^\star) - \hat{V}_T(a^\star) \Big]  + \Big[\hat{V}_T\big( \hat{a}_T \big) - V\big( \hat{a}_T \big)\Big]\nn\\
            &<& \sqrt{2\epsilon} + 2\zeta,
        \end{eqnarray}
        where the first inequality results from that
        \[
            \hat{a}_T \in \argmax_{a\in\mathcal{A}} \hat{V}_T(a).
        \]
        Letting $\delta = 2|\mathcal{A}|\exp(-2T\zeta^2)$, we arrive at our desired result.
\end{proof}

\decisiontwo*
\begin{proof}
Fix a prompt $x \in \mathcal{X}$. We can consider a case where there are only two responses and two actions, i.e. $\mathcal{Y} = \{y_1, y_2\}$, $\mathcal{A} = \{a_1, a_2\}$.  The value function $v$ is given by
    \begin{eqnarray}
        v(a_1, y_1) = 1-\beta, && v(a_1, y_2) = 1,\nn\\
        v(a_2, y_1) = 1, && v(a_2, y_2) = 0,\nn
    \end{eqnarray}
    for some $\beta > 0$.
    Let the population choice distribution be given as 
    \[
        \overline{P}(y_1|x) = \overline{P}(y_2|x) = \frac{1}{2},
    \]
    leading to
    \[
        P_{A^\star}(y_1|x) = P_{A^\star}(y_2|x) = \frac{1}{2}.
    \]
    We can see that the expected values of the two actions are
    \[
        V(a_1) = 1 - \frac{\beta}{2}, \quad V(a_2) = \frac{1}{2}.
    \]
    Consider an AI $A$ that deterministically produces $y_1$ as response.  Apparently 
    \[
        \E\big[ S^{\rm agg}(A, A^\star)\big] \geq \E\big[ S^{\rm agg}(A^\star, A)\big].
    \]
    Meanwhile, since $v(a_1,y_1) < v(a_2, y_1)$, the decision-maker tends to choose action $a_2$ if they only observe the response generated by $A$.  As such, for any $T\geq 1$, $\hat{a}_T = a_2$.  This results in
    \[
        \max_{a\in\mathcal{A}} V(a) - V\big( \hat{a}_T \big) = \frac{1-\beta}{2}.
    \]
    Letting $\beta = 1/3$, we arrive at the result.
\end{proof}

\end{document}